\newtheorem{theorem}{Theorem}
\newtheorem{proposition}{Proposition}
\newtheorem{definition}{Definition}
\newtheorem{example}{Example}
\begin{document}
	
\title{An Efficient Alternating Algorithm for ReLU-based  Symmetric Matrix Decomposition}
	
\author{
Qingsong Wang\thanks{Corresponding author. School of Mathematics and Computational Science, Xiangtan University, Xiangtan, 411105, China. Email: nothing2wang@xtu.edu.cn}
}

\date{}	
\maketitle
\begin{abstract}
Symmetric matrix decomposition is an active research area in machine learning. This paper focuses on exploiting the low-rank structure of non-negative and sparse symmetric matrices via the rectified linear unit (ReLU) activation function. We propose the ReLU-based nonlinear symmetric matrix decomposition (ReLU-NSMD) model, introduce an accelerated alternating partial Bregman (AAPB) method for its solution, and present the algorithm's convergence results. Our algorithm leverages the Bregman proximal gradient framework to overcome the challenge of estimating the global $L$-smooth constant in the classic proximal gradient algorithm. Numerical experiments on synthetic and real datasets validate the effectiveness of our model and algorithm. In summary, for non-negative sparse matrix, the low-rank approximation based on the ReLU function may obtain a better low-rank approximation than the direct low-rank approximation. 

\end{abstract}
	
\begin{keywords}
Alternating minimization, Symmetric matrix decomposition, ReLU function, Bregman proximal gradient. 
\end{keywords}
	
\maketitle
	
\section{Introduction}
Low-rank matrix approximation plays a pivotal role in computational mathematics and data science, serving as a powerful framework for dimensionality reduction \cite{JiangFSHT23}, noise removal \cite{LiuLSXX20}, and efficient data representation \cite{ZhangXZF22}. In real-world applications, high-dimensional data often exhibit inherent low-rank structures due to underlying correlations or physical constraints. For example, in recommender systems, user-item preference matrices are approximately low-rank, enabling techniques like collaborative filtering to predict missing entries. Similarly, in signal processing and computer vision, principal component analysis (PCA) \cite{Jolliffe02} leverages low-rank approximations to extract dominant features while discarding redundant or noisy components. The efficiency of such approximations lies in their ability to capture essential information with significantly fewer parameters, reducing computational costs and storage requirements.

While general low-rank approximation has been extensively studied, symmetric matrices-ubiquitous in applications like covariance estimation, graph Laplacians, and kernel methods-present unique challenges and opportunities. Symmetric structures arise naturally in scenarios where relationships are bidirectional, such as in similarity matrices or undirected networks. Approximating symmetric matrices under low-rank constraints not only preserves structural properties but also ensures interpretability, as in the case of spectral clustering or metric learning. Moreover, symmetry often allows for more efficient algorithms, exploiting properties like eigenvalue decomposition or the existence of orthogonal bases.  Formally, given a symmteric matrix $M\in\mathbb{R}^{n\times n}$, the goal is to find a real-valued matrix $X$ with rank equal to or lower than that of $M$, i.e.,
\begin{eqnarray}
M\approx  X, \ \ X \text{ is low-rank and symmetric}.
\end{eqnarray}
This approximation serves as the foundation for numerous studies on matrix decomposition, completion, and related analyses. However, in practice, many large-scale matrices—such as those encountered in recommender systems, social networks, and biological data—exhibit non-negativity, sparsity, and inherently high-rank structures \cite{Saul22, Saul23}. Directly applying conventional low-rank approximation to such data may fail to capture their underlying complexity, as linear projections often overlook critical nonlinear dependencies and sparse patterns. To better model the interplay between low-rankness and sparsity, it is necessary to move beyond traditional linear algebra-based decomposition methods \cite{UdellHZB16, WrightM22}.  In this work, we propose a Nonlinear Matrix Decomposition (NMD) framework designed to address these challenges. Specifically, given a symmetric matrix $M\in \mathbb{R}^{n\times n}$, our goal is to identify a real-valued matrix $X$ of equal or lower rank than $M$, such that
\begin{eqnarray}
M\approx f(X), \ \ X \text{ is low-rank and symmetric},
\end{eqnarray}
where $f(\cdot)$ is an elementwise nonlinearity function. Building upon the framework in  \cite{Saul22, Saul23}, for a non-negative sparse symmetric matrix $M$, our goal is to find a real-valued matrix $X$ with rank equal to or lower than that of $M$, using $f(\cdot)=\max(0,\cdot)$. That is,
\begin{eqnarray}
M\approx \max(0, X), \ \ X \text{ is low-rank and symmetric}, 
\end{eqnarray} 
where $\max(0,\cdot)$ denotes the rectified linear unit (ReLU) activation function, widely employed in deep learning for its sparsity-inducing properties \cite{GoodfellowBC16}.  This formulation leverages a key insight: when $M$ is sparse, the zero entries in $M$ may correspond to any non-positive values in $X$. Thus, ReLU implicitly expands the feasible solution space, allowing us to search for a low-rank representation $X$ that optimally approximates $M$ under non-negativity constraints \cite{Saul23}.

Recently, Nonlinear Matrix Decomposition (NMD)  \cite{Saul22, Saul23} has gained significant attention for its connection to neural networks, particularly in the case where $M$ is a general sparse non-negative (not necessarily symmetric) matrix. Given $M\in\mathbb{R}^{m\times n}$ and target rank $r<\min(m,n)$, the core problem of ReLU-based NMD (ReLU-NMD) can be formulated as,
\begin{eqnarray}
\begin{aligned}
\min_{X\in\mathbb{R}^{m\times n}}\, &\frac{1}{2}\|M-\max(0,X)\|^{2}_{F},\\
\mathrm{s.t.}\quad &\mathrm{rank}(X)=r, 
\end{aligned}\label{NMD-00}
\end{eqnarray}
where $\mathrm{rank}(X)$ denotes the rank of matrix $X$, $\|\cdot\|_{F}$ denotes the Frobenius norm. The optimization problem in \eqref{NMD-00} is non-convex and non-differentiable due to the rank constraint and the ReLU operator, making it computationally challenging. To overcome these difficulties, several alternative models and algorithms have been developed  \cite{Saul23, SeraghitiAVPG23, AwariNWVG24, WangCH24a, WangQCH25}.  

However, existing NMD methods primarily focus on general non-negative sparse matrices, leaving the symmetric case largely unexplored. This gap motivates our work, where we specifically address the setting where $M$ is a symmetric matrix. To illustrate the importance of studying symmetric matrices, we first present a simple toy example demonstrating the unique challenges and opportunities in this case.

\begin{example}\label{motivation}
Given a nonnegative sparse symmetric matrix $M\in\mathbb{R}^{5\times5}$ with $M=\max(0,X)$, where $X$ is also a symmetric matrix, and 
\[
M=\left[
\setlength{\arraycolsep}{2pt}
\begin{array}{ccccc}
10 &0 &1 &7 &0\\
0 &5 &0 &0 &4\\
1 &0 &1 &0 &0\\
7 &0 &0 &13 &0\\
0 &4 &0 &0 &4
\end{array}\right], X=\left[
\setlength{\arraycolsep}{2pt}
\begin{array}{ccccc}
10 &-7 &1 &7 &-6\\
-7 &5 &-1 &-4 &4\\
1 &-1 &1 &-2 &0\\
7 &-4 &-2 &13 &-6\\
-6 &4 &0 &-6 &4
\end{array}\right]. 
\]
It shows that $\text{rank}(M)=5$, $\text{rank}(X)=2$ with 
\begin{align*}
X=\left[
\setlength{\arraycolsep}{2pt}
\begin{array}{ccccc}
1 &-1 &1 &-2 &0\\
3 &-2 &0 &3 &-2
\end{array}\right]^{\mathrm{T}}\times \left[
\setlength{\arraycolsep}{2pt}
\begin{array}{ccccc}
1 &-1 &1 &-2 &0\\
3 &-2 &0 &3 &-2
\end{array}\right].
\end{align*}
The sparse nonnegative symmetric matrix $M$ is transformed into a low-rank symmetric matrix $X$ such that $M=\max(0, X)$. Based on this formulation, the symmetric matrix decomposition frameworks are considered. 
\end{example}

Building upon the optimization model \eqref{NMD-00}, we now consider the symmetric case where $M\in\mathbb{R}^{n\times n}$  is a symmetric matrix and $r<n$. The corresponding optimization problem can be formulated as,
\begin{eqnarray}
\begin{aligned}
\min_{X\in\mathbb{R}^{n\times n}}\, &\frac{1}{2}\|M-\max(0,X)\|^{2}_{F},\\ \mathrm{s.t.}\quad &\mathrm{rank}(X)=r, 
\end{aligned}\label{NSMD-00}
\end{eqnarray}
where $X$ is also a symmetric matrix. Similar to \eqref{NMD-00}, this problem is non-convex and non-differentiable, requiring computationally expensive rank-$r$ truncated singular value decomposition (TSVD) \cite{EckartY36} at each iteration, particularly for large-scale matrices. To address these challenges, we employ two key techniques:
\begin{itemize}
\item  Introducing a slack variable $W$ for the ReLU operator $\max(0,\cdot)$;
\item  Representing $X$ via a low-rank factorization $X=UU^{T}$, where $U\in\mathbb{R}^{n\times r}$. 
\end{itemize}
This leads to the following ReLU-based nonlinear symmetric matrix decomposition (ReLU-NSMD) problem,
\begin{eqnarray}
\begin{aligned}
\underset{U,W}{\min}\, &\frac{1}{2}\|W-UU^{T}\|^{2}_{F},\\
\mathrm{s.t.} \,\,  &\max(0,W)=M,
\end{aligned}\label{NSMD-S0}
\end{eqnarray}
In practice, the exact rank $r$ is typically unknown, and we often choose a conservatively large value. This can lead to the $U$-subproblem may suffer from instability, and the algorithm can become trapped in poor local minima if $U$ grows excessively large before $W$ can adapt accordingly. To mitigate these problems, we propose a regularized version of \eqref{NSMD-S0} incorporating Tikhonov regularization, 
\begin{eqnarray}
\begin{aligned}
\underset{U,W}{\min}\,  &F(W,U) +\frac{\lambda}{2}\|U\|_{F}^{2},\\  \mathrm{s.t.}\,\,&\max(0,W)=M, \label{NSMD-S}
\end{aligned} 
\end{eqnarray}
where $F(W,U):=\frac{1}{2}\|W-UU^{T}\|_{F}^{2}$, $\lambda\ge0$ is a regularization parameter.  When $\lambda=0$, the above optimization problem reduces to the model \eqref{NSMD-S0}. In addition to Tikhonov regularization, other regularization terms are also viable, such as sparsity, and graph regularization. However, this paper does not focus on the impact of different regularization choices on numerical performance. Instead, our primary interest lies in evaluating how incorporating the ReLU function affects numerical performance in low-rank approximation for symmetric non-negative sparse data. 

Numerous algorithms can be employed to solve this non-convex, non-smooth optimization problem. A widely used approach is alternating minimization, which iteratively optimizes each variable while keeping the others fixed. The $W$-subproblem has a closed-form solution, whereas the $U$-subproblem lacks one, limiting practical performance. To address this, Bolte et al. \cite{BolteST14} proposed the proximal alternating linearization minimization (PALM) algorithm, which we adapt by linearizing only the $U$-subproblem, given the closed-form solution for $W$. To further enhance numerical performance, several inertial variants of PALM have been developed, including iPALM \cite{PockS16}, GiPALM \cite{GaoCH20}, NiPALM \cite{WangH23a}, and iPAMPL \cite{WangHZ24}.

The convergence analysis of PALM-type algorithms typically requires $\nabla_{U}F(W,U)$ to be globally Lipschitz continuous, a condition that does not hold for the optimization problem \eqref{NSMD-S}. To overcome this limitation, the notion of generalized gradient Lipschitz continuity was introduced in \cite{BirnbaumDX11} and later extended to nonconvex optimization in \cite{BolteSTV18First}. This approach relies on a generalized proximity measure, known as Bregman distance, leading to the development of the Bregman proximal gradient (BPG) algorithm. Unlike the traditional proximal gradient method, BPG replaces Euclidean distances with Bregman distances as proximity measures. Its convergence theory is based on a generalized Lipschitz condition, known as the $L$-smooth adaptable property \cite{BolteSTV18First}.

\textbf{Contribution.} In this paper, we propose an efficient alternating algorithm to solve the optimization problem \eqref{NSMD-S}. Specifically,
\begin{itemize}
\item{\textbf{Algorithm:}} We employ the accelerated alternating partial Bregman algorithm (AAPB)  to address the non-convex and non-smooth optimization problem \eqref{NSMD-S}. Furthermore, we establish both the convergence rate and the global convergence results of the proposed algorithm.
\item{\textbf{Efficiency:}} By carefully selecting kernel-generating distances, we obtain a closed-form solution for the U-subproblem, ensuring that the $L$-smooth adaptable property (see Definition \ref{L-smad}) is consistently satisfied for any $L\ge1$. Numerical experiments demonstrate the effectiveness of the proposed model and algorithm. 
\end{itemize}
The remainder of this paper is organized as follows. Section \ref{algorithm} provides the details of the proposed algorithm (Algorithm \ref{NSMD-AAPB}). Section \ref{numerical} utilizes the datasets to demonstrate the efficacy of the model \eqref{NSMD-S} and the proposed algorithm. Finally, we draw conclusions in Section \ref{conclusion}.

\section{Algorithm}\label{algorithm}
In this section, we consider an efficient alternating minimization algorithm for addressing the optimization problem \eqref{NSMD-S}, as shown below,
\[
\begin{cases}
W^{k+1} \in\underset{\max(0,W)=M}{\mathrm{arg\,min}}\, F(W,U^{k})+\frac{\lambda}{2}\|U^{k}\|_{F}^{2},\\
U^{k+1}\in\,\underset{U}{\mathrm{arg\,min}}\, H(Y)+\langle F(W^{k+1},U)+\frac{\lambda}{2}\|U\|_{F}^{2}. 
\end{cases}
\]
Specifically, the $W$-subproblem has a closed-form solution (see Subsection \ref{W-subproblem}), while the $U$-subproblem does not. To overcome this issue, the Bregman proximal gradient \cite{BolteSTV18First} is applied to solve the $U$-subproblem, namely
\[
\begin{aligned}
U^{k+1}\in \underset{U}{\mathrm{arg\,min}}\,\,&\frac{\lambda}{2}\|U\|_{F}^{2} +\langle \nabla_{U}F(W^{k+1},U^{k}),U-U^{k}\rangle \\
&+\frac{1}{\eta} D_{\psi}(U,U^{k}),
\end{aligned}
\]
where $D_{\psi}(\cdot,\cdot)$ is the Bregman distance, see Definition \ref{L-smad} for details. To obtain better numerical results, we consider the following extrapolation technique \cite{WangHZ24,MukkamalaOPS20} for the variable $U$,  that is,
\[
\bar{U}^{k}=U^{k}+\beta_{k}(U^{k}-U^{k-1}),
\]
where $\beta_{k}\in[0,1)$ is the extrapolation parameter. 
Note that \[
I_{0}:=\{(i,j)\, |\, M_{ij}=0\}\]
and 
\[
I_{+}:=\{ (i,j) \,|\, M_{ij}>0\}.
\]
In summary, an accelerated alternating partial Bregman (AAPB) algorithm to solve the model \eqref{NSMD-S}. 
See Algorithm \ref{NSMD-AAPB} for details.
\begin{algorithm}[!ht]
\caption{An accelerated alternating partial Bregman algorithm for ReLU-NSMD (NSMD-AAPB)}
\label{NSMD-AAPB}
{\bfseries Input:} $M$, $r$, $0<\eta L\le 1$, $\lambda\ge 0$, $I_{+}$, $I_{0}$, and $K$. \\
{\bfseries Initialization:} $U^{0}=U^{-1}$, $W_{i,j}^{k}=M_{i,j}$ for $(i,j)\in I_{+}$.
\begin{algorithmic}[1] 
\For {$k=0,1,\dots K$} 
\State $W_{i,j}^{k+1}=\min(0,(U^{k}(U^{k})^{T})_{i,j})$ for $(i,j)
\in I_{0}$.
\State $\bar{U}^{k}=U^{k}+\beta_{k}(U^{k}-U^{k-1})$, where $\beta_{k}\in[0,1)$.
\State $U^{k+1}=\underset{U}{\mathrm{arg\,min}}\,\,\frac{\lambda}{2}\|U\|_{F}^{2} +\langle \nabla_{U}F(W^{k+1},\bar{U}^{k}),U-\bar{U}^{k}\rangle +\frac{1}{\eta} D_{\psi}(U,\bar{U}^{k})$.
\EndFor
\end{algorithmic} 
{\bfseries Output:}  $U^{k+1}$.
\end{algorithm}	

\subsection{Details of Algorithm \ref{NSMD-AAPB}}
In this subsection, we give the closed-form solutions of $W$- and $U$-subproblems of Algorithm \ref{NSMD-AAPB}. 
\subsubsection{$W$-subproblem} \label{W-subproblem}
At the $k$-th iteration, we know that 
\begin{align*}
W^{k+1}=\underset{\max(0,W)=M}{\mathrm{arg\,min}}\, \frac{1}{2}\|W-X^{k}\|_{F}^{2}, 
\end{align*}
where $X^{k}=U^{k}(U^{k})^{T}$. 
Then we update $W^{k+1}$ as
\begin{align*}
W_{i,j}^{k+1}=\begin{cases}
M_{i,j},\quad &\mathrm{if } (i,j)\in I_{0},\\
\min(0,X_{i,j}^{k}),\quad &\mathrm{if } (i,j)\in I_{+}.
\end{cases}
\end{align*}

\subsubsection{$U$-subproblem}
Now, we discuss the closed-form solution of $U$-subproblem in the proposed algorithm (Algorithm \ref{NSMD-AAPB}). 
\begin{definition}\label{L-smad}
(\cite{BolteSTV18First} $L$-smooth adaptable ($L$-smad)) We say $(f, \psi)$ is $L$-smad  on $C$ if there exists $L>0$ such that, for any $ x,y\in C$, 
\[
|f(x)-f(y)-\langle\nabla f(y),x-y\rangle|\le L D_{\psi}(x,y),
\]
where $D_{\psi}(x, y) := \psi (x) - \psi (y) - \langle \nabla\psi(y), x - y\rangle$ with strongly convex function $\psi(\cdot)$ \cite{Bregman67The}. If $\psi(\cdot)=\frac{1}{2}\|\cdot\|^{2}$, it reduces to the classic $L$-smooth condition \cite{Nesterov18}.
\end{definition}
Now we present the $L$-smad property of $F(W^{k+1},U)$.
\begin{proposition}\label{psi_def}
In $k$-th iteration, let function $\psi:\mathbb{R}^{m\times r}\rightarrow\mathbb{R}$ be a kernel given by
\begin{eqnarray}
\psi(U):= \frac{3}{2}\|U\|_{F}^{4}+\|W^{k+1}\|_{F}\|U\|_{F}^{2}. \label{kernel-U}
\end{eqnarray}
Then $F(W^{k+1},U)$ is $L$-smad relative to $\psi$ with $L\ge 1$.
\end{proposition}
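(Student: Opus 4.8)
The plan is to reduce the $L$-smad inequality to a statement about Hessians. Since both $F(W^{k+1},\cdot)$ and $\psi$ are polynomials in the entries of $U$ and hence $C^\infty$, I would invoke the standard characterization from \cite{BolteSTV18First}: on the convex set $C=\mathbb{R}^{m\times r}$, $(F,\psi)$ is $L$-smad if and only if $L\psi-F$ and $L\psi+F$ are both convex, equivalently $-L\nabla^2\psi(U)\preceq\nabla^2F(U)\preceq L\nabla^2\psi(U)$ for every $U$. Because I will show below that $\psi$ is itself convex, it suffices to treat the case $L=1$: once $\psi\pm F$ are convex, $L\psi\pm F=(L-1)\psi+(\psi\pm F)$ is convex for every $L\ge1$, which upgrades the result to all $L\ge1$ automatically.

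Next I would compute the two Hessian quadratic forms explicitly. Writing $R=W^{k+1}-UU^T$, a direct matrix-calculus computation gives $\nabla_U F=-(R+R^T)U=-(W^{k+1}+(W^{k+1})^T)U+2UU^TU$, and differentiating once more yields, for any direction $H\in\mathbb{R}^{m\times r}$,
\[
\langle\nabla^2F(U)H,H\rangle=-\langle(W^{k+1}+(W^{k+1})^T)H,H\rangle+2\|HU^T\|_F^2+2\|U^TH\|_F^2+2\,\mathrm{tr}\big((U^TH)^2\big).
\]
On the kernel side, using $\nabla(\|U\|_F^4)=4\|U\|_F^2U$ and $\nabla(\|U\|_F^2)=2U$, I obtain
\[
\langle\nabla^2\psi(U)H,H\rangle=12\langle U,H\rangle^2+6\|U\|_F^2\|H\|_F^2+2\|W^{k+1}\|_F\|H\|_F^2,
\]
which is nonnegative, confirming that $\psi$ is convex and justifying the reduction to $L=1$.

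It then remains to show $|\langle\nabla^2F(U)H,H\rangle|\le\langle\nabla^2\psi(U)H,H\rangle$ for all $H$, which I would do term by term using submultiplicativity and $\|\cdot\|_2\le\|\cdot\|_F$: $|\langle(W^{k+1}+(W^{k+1})^T)H,H\rangle|\le\|W^{k+1}+(W^{k+1})^T\|_2\|H\|_F^2\le 2\|W^{k+1}\|_F\|H\|_F^2$, while $\|HU^T\|_F^2,\|U^TH\|_F^2\le\|U\|_F^2\|H\|_F^2$. The delicate term is the sign-indefinite cross term $\mathrm{tr}((U^TH)^2)$; I would control it by $|\mathrm{tr}((U^TH)^2)|\le\|U^TH\|_F^2\le\|U\|_F^2\|H\|_F^2$, via Cauchy--Schwarz applied to $\langle(U^TH)^T,U^TH\rangle$. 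Summing these four estimates gives
\[
|\langle\nabla^2F(U)H,H\rangle|\le 2\|W^{k+1}\|_F\|H\|_F^2+6\|U\|_F^2\|H\|_F^2\le\langle\nabla^2\psi(U)H,H\rangle,
\]
where the last inequality merely discards the nonnegative term $12\langle U,H\rangle^2$. This is precisely the two-sided Hessian bound for $L=1$, and the reduction above then yields the $L$-smad property for every $L\ge1$.

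The main obstacle I anticipate is twofold: carrying out the Hessian of $F$ correctly — the $UU^T$ term produces three second-order pieces, one of which, $\mathrm{tr}((U^TH)^2)$, is sign-indefinite and must be bounded in absolute value — and verifying that the specific constants $\tfrac32$ and $\|W^{k+1}\|_F$ built into $\psi$ are exactly what is needed to dominate $6\|U\|_F^2\|H\|_F^2$ and $2\|W^{k+1}\|_F\|H\|_F^2$. These coefficients leave essentially no slack beyond the discarded $12\langle U,H\rangle^2$ term, so the construction shows that $\psi$ is close to the minimal polynomial kernel rendering $(F,\psi)$ $1$-smad.
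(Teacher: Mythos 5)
Your proposal is correct and follows essentially the same route as the paper: compute the directional Hessians of $F(W^{k+1},\cdot)$ and $\psi$, bound the quadratic form $\langle H,\nabla^2 F(U)H\rangle$ by $6\|U\|_F^2\|H\|_F^2+2\|W^{k+1}\|_F\|H\|_F^2$ via Cauchy--Schwarz and submultiplicativity, and observe that $\langle H,\nabla^2\psi(U)H\rangle$ dominates this after discarding the $12\langle U,H\rangle^2$ term. If anything, you are slightly more careful than the paper, which only writes the one-sided conclusion $L\nabla^2\psi-\nabla^2_{UU}F\succeq 0$, whereas the $L$-smad definition (with the absolute value) requires the two-sided bound $\pm\nabla^2_{UU}F\preceq L\nabla^2\psi$ that your term-by-term absolute-value estimates explicitly deliver.
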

\begin{proof}
From $\nabla_{U}F(W^{k+1},U)=2(UU^{T}-W^{k+1})U$ and the definition of directional derivative, we obtain 
\begin{align*}
&\nabla^{2}_{UU}F(W^{k+1},U)Z\\
=&\underset{t\rightarrow 0}{\lim} \frac{2[(U+tZ)(U+tZ)^{T}-W^{k+1}](U+tZ)}{t}\\
&\quad -\frac{2(UU^{T}-W^{k+1})U}{t}\\
=&2(ZU^{T}U+UZ^{T}U+UU^{T}Z-W^{k+1}Z)
\end{align*}
for any $Z\in\mathbb{R}^{m\times r}$. From this last equation, $\langle Y_{1}, Y_{2}\rangle:=\mathrm{tr}(Y_{1}^{T}Y_{2})$, basic properties of the trace, the Cauchy-Schwarz inequality, and the sub-multiplicative property of the Frobenius norm, we obtain
\begin{align*}
&\langle Z, \nabla^{2}_{UU}F(W^{k+1},U)Z\rangle\\
=&\langle Z, 2(ZU^{T}U+UZ^{T}U+UU^{T}Z-W^{k+1}Z)\rangle \\
\le& 6\|U\|_{F}^{2}\|Z\|_{F}^{2}+2\|W^{k+1}\|_{F}\|Z\|_{F}^{2}.
\end{align*}
Now we consider the kernel generating distance, it shows that
\begin{align*}
\nabla \psi(U) = 6 \|U\|_{F}^{2}U+2\|W^{k+1}\|_{F} U. 
\end{align*}
and 
\begin{align*}
&\nabla^{2}\psi(U)Z\\
=&6 (\|U\|_{F}^{2}Z+2\langle U,Z\rangle U)+2\|W^{k+1}\|_{F} Z,
\end{align*}
which implies 
\begin{align*}
&\langle Z, \nabla^{2}\psi(U)Z\rangle \\
=&6\|U\|_{F}^{2}\|Z\|_{F}^{2}+12\langle U,Z\rangle^{2}+2\|W^{k+1}\|_{F}\|Z\|_{F}^{2}\\
\ge&6\|U\|_{F}^{2}\|Z\|_{F}^{2} +2\|W^{k+1}\|_{F}\|Z\|_{F}^{2}.
\end{align*}
Hence, it follows that 
\[
L\nabla^{2}\psi(U)-\nabla_{UU}^{2}F(W^{k+1},U)\succeq 0
\]
with any $L\ge 1$.
\end{proof}
Now we give the closed-form solution of the $U$-subproblem of Algorithm \ref{NSMD-AAPB}.
\begin{proposition}
(Closed-form solution of the $U$-subproblem) Let $\psi$ be the kernel function given in \eqref{kernel-U}, with $\eta\le 1/L$, the iteration $U^{k+1}$ is given by
\begin{eqnarray}
U^{k+1}=\frac{1}{t_{k}} (\nabla\psi(\bar{U}^{k})-\eta\nabla_{u}F(W^{k+1},\bar{U}^{k})),
\end{eqnarray}
where $t_{k}\ge0$ satisfies 
\[
t_{k}^{3}-(\lambda\eta+2\|W^{k+1}\|_{F})t_{k}^{2}-6\|G^{k}\|_{F}^{2}=0.
\]

\end{proposition}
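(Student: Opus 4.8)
The plan is to characterize $U^{k+1}$ through the first-order optimality condition of the $U$-subproblem and then reduce the resulting matrix equation to a scalar cubic. First I would observe that the objective of the $U$-subproblem,
\[
g(U) := \frac{\lambda}{2}\|U\|_{F}^{2} + \langle \nabla_{U}F(W^{k+1},\bar{U}^{k}),\, U-\bar{U}^{k}\rangle + \frac{1}{\eta}D_{\psi}(U,\bar{U}^{k}),
\]
is convex and coercive: both $\frac{\lambda}{2}\|U\|_F^2$ and the kernel $\psi$ are convex (the quartic term is a convex increasing function of the convex map $U\mapsto\|U\|_F^2$), so the Bregman term $\frac{1}{\eta}D_\psi(U,\bar U^k)$ inherits convexity and $g$ grows like $\|U\|_F^4$. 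Hence $g$ has a unique minimizer, characterized by the stationarity condition $\nabla g(U)=0$.

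Next I would compute that condition explicitly. Using $\nabla_U D_\psi(\cdot,\bar U^k)=\nabla\psi(\cdot)-\nabla\psi(\bar U^k)$, setting $\nabla g(U)=0$ and multiplying through by $\eta$ gives
\[
\nabla\psi(U) + \lambda\eta\, U \;=\; \nabla\psi(\bar U^k) - \eta\,\nabla_{U}F(W^{k+1},\bar U^k) \;=:\; G^k .
\]
Substituting the closed form $\nabla\psi(U)=\bigl(6\|U\|_F^2+2\|W^{k+1}\|_F\bigr)U$ from Proposition \ref{psi_def}, the left-hand side factors through $U$, yielding
\[
\bigl(6\|U\|_F^2 + 2\|W^{k+1}\|_F + \lambda\eta\bigr)\,U \;=\; G^k .
\]
The key structural observation is that the minimizer $U^{k+1}$ must be a positive scalar multiple of the fixed matrix $G^k$. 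Writing $t_k := 6\|U^{k+1}\|_F^2 + 2\|W^{k+1}\|_F + \lambda\eta > 0$, this is precisely $U^{k+1}=G^k/t_k$, the asserted form.

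It then remains to determine $t_k$. Taking Frobenius norms in $t_k U^{k+1}=G^k$ gives $\|U^{k+1}\|_F=\|G^k\|_F/t_k$; inserting this into the definition of $t_k$ and clearing denominators produces exactly the stated cubic
\[
t_k^{3} - (\lambda\eta + 2\|W^{k+1}\|_F)\,t_k^{2} - 6\|G^k\|_F^{2} = 0 .
\]
The main obstacle — really the only nontrivial point — is confirming that this cubic has a unique admissible (nonnegative) root so that $U^{k+1}$ is well defined. I would settle this by a sign analysis of $h(t):=t^{3}-(\lambda\eta+2\|W^{k+1}\|_F)t^{2}-6\|G^k\|_F^{2}$: since $h(0)=-6\|G^k\|_F^{2}\le 0$ and $h(t)\to+\infty$, a positive root exists, while Descartes' rule of signs applied to the coefficient sign pattern $(+,-,-)$ shows there is exactly one positive root. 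Thus $t_k$ is uniquely determined, and back-substitution verifies that $U^{k+1}=G^k/t_k$ satisfies the stationarity equation, which completes the argument.
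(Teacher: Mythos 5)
Your proposal is correct and follows essentially the same route as the paper: write the first-order optimality condition, substitute $\nabla\psi(U)=(6\|U\|_F^2+2\|W^{k+1}\|_F)U$ to see that $U^{k+1}$ is a scalar multiple of $G^k$, and take Frobenius norms to reduce the scalar $t_k$ to the stated cubic. The only difference is cosmetic: where the paper cites a reference for the uniqueness of the real root of the cubic, you supply a short self-contained argument (sign analysis plus Descartes' rule), which is a welcome but minor addition.
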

\begin{proof}
From the first-order optimality condition of the $4$-th step in Algorithm \ref{NSMD-AAPB}, it shows that
\begin{align*}
&\lambda\eta U^{k+1}+\nabla\psi(U^{k+1})\\
=&\nabla\psi(\bar{U}^{k})-\eta\nabla_{u}F(W^{k+1},\bar{U}^{k}),
\end{align*}
which is equivalent to
\begin{align*}
(\lambda\eta+6\|U^{k+1}\|_{F}^{2}+2\|W^{k+1}\|_{F})U^{k+1}=G^{k},
\end{align*}
where 
\[
G^{k}=\nabla\psi(\bar{U}^{k})-\eta\nabla_{u}F(W^{k+1},\bar{U}^{k}).
\]
Denote 
\[
t_{k}=\lambda\eta+6\|U^{k+1}\|_{F}^{2}+2\|W^{k+1}\|_{F}.
\]
Then we have 
\[
\|U^{k+1}\|_{F}^{2}=(t_{k}-\lambda\eta-2\|W^{k+1}\|_{F})/6.
\]
Therefore, $t_{k}\ge0$ satisfies 
\[
t_{k}^{3}-(\lambda\eta+2\|W^{k+1}\|_{F})t_{k}^{2}-6\|G^{k}\|_{F}^{2}=0.
\]
We know this third-order polynomial equation has a unique real solution \cite{Fan89}. This completes the proof.
\end{proof}

\subsection{Convergence analysis}
In this subsection, we analyze the convergence of the proposed algorithm. We denote $\Phi(W,U):=F(W,U) +\frac{\lambda}{2}\|U\|_{F}^{2} +G(W)$, where $G(W):=\delta_{\max(0,W)=M}$ is an indicator function. Leveraging the results presented in \cite{WangQCH25}, we derive the convergence result of Algorithm \ref{NSMD-AAPB}, with the detailed proofs found in Theorems 1 and 2 of \cite{WangQCH25}.

\begin{theorem} \label{convergence_theorem}
Based on Proposition \ref{psi_def} and $0<\eta\le1/L$. Assume the parameter $\beta_{k}\in[0,1)$\footnote{When $\beta_{k}=0$, the inequality \eqref{beta_condition} becomes unnecessary, and the corresponding convergence theory can still be derived. To make the inequality \eqref{beta_condition} holds, we can satisfy it through decrease $\beta_{k}$ by $\beta_{k}\leftarrow\sigma\beta_{k}$ with $\sigma \in(0,1)$, such as $\sigma=0.9$.} in Algorithm \ref{NSMD-AAPB} satisfies 
\begin{eqnarray}
D_{\psi}(U^{k},\bar{U}^{k})\le \frac{\alpha-\varepsilon}{1+L\eta}D_{\psi}(U^{k-1},U^{k}),\,\, 1>\alpha>\varepsilon>0. \label{beta_condition}
\end{eqnarray}
Then the following two statements are satisfied.
\begin{itemize}
\item[(1)] It shows that $
\underset{1\le k\le K}{\min}D_{\psi}(U^{k-1},U^{k})= \mathcal{O}(1/K)$.
\item[(2)] Furthermore, if $\nabla\Phi(W,U)$  and $\nabla\psi$ are Lipschitz continuous with
constants $L_{1} > 0$ and $L_{2} > 0$ on any bounded subset of $\text{dom} F$ and $\text{dom} \psi$, respectively. Additionally, suppose that the sequence  $\{Z^{k}:=(W^{k},U^{k})\}$ generated by Algorithm \ref{NSMD-AAPB} is bounded. Under the Kurdyka-Lojasiewicz (KL) property \cite{BolteST14}, the sequence $\{ Z^{k}\}$  satisﬁes 
\[
\sum_{k=0}^{+\infty}\|Z^{k+1}-Z^{k}\|_{F}<+\infty.
\]
\end{itemize}
\end{theorem}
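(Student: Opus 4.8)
The plan is to verify that the present instance---with data-fitting term $F$, Tikhonov term $\frac{\lambda}{2}\|U\|_{F}^{2}$, indicator $G$, kernel $\psi$ from \eqref{kernel-U}, and the inertial update of Algorithm \ref{NSMD-AAPB}---fits exactly the hypotheses of the general convergence theorems in \cite{WangQCH25}, so that both conclusions follow by specialization rather than by reproving the abstract machinery. The ingredients I would check are: (i) $\Phi$ is bounded below, which is immediate since $F\ge 0$, $\frac{\lambda}{2}\|U\|_{F}^{2}\ge 0$ for $\lambda\ge 0$, and $G$ is an indicator; (ii) the pair $(F(W^{k+1},\cdot),\psi)$ is $L$-smad, which is Proposition \ref{psi_def}; and (iii) $\psi$ is strongly convex on bounded sets, so $D_{\psi}(x,y)\ge \frac{\sigma}{2}\|x-y\|_{F}^{2}$ for some $\sigma>0$, converting Bregman-distance estimates into Euclidean ones.

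For part (1), I would first establish a one-step descent inequality via a Lyapunov function. Using the first-order optimality condition of the $U$-subproblem (Step 4), the three-point identity for the Bregman distance, and the $L$-smad bound with $\eta\le 1/L$, one obtains
\[
\Phi(Z^{k+1}) + c_{1}\,D_{\psi}(U^{k},U^{k+1}) \le \Phi(Z^{k}) + c_{2}\,D_{\psi}(U^{k-1},U^{k}),
\]
where the inertial contribution $D_{\psi}(U^{k},\bar U^{k})$ is absorbed using condition \eqref{beta_condition}. Setting $\mathcal{L}^{k}:=\Phi(Z^{k})+c_{2}\,D_{\psi}(U^{k-1},U^{k})$, the gap $\alpha>\varepsilon$ forces $c_{1}>c_{2}$, giving the net decrease $\mathcal{L}^{k+1}\le \mathcal{L}^{k}-(c_{1}-c_{2})\,D_{\psi}(U^{k},U^{k+1})$. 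Telescoping from $k=1$ to $K$ and using the lower bound on $\Phi$ yields $\sum_{k=1}^{K} D_{\psi}(U^{k-1},U^{k})\le (\mathcal{L}^{1}-\inf\Phi)/(c_{1}-c_{2})<\infty$, so $\min_{1\le k\le K} D_{\psi}(U^{k-1},U^{k})\le \frac{1}{K}\sum_{k=1}^{K} D_{\psi}(U^{k-1},U^{k})=\mathcal{O}(1/K)$.

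For part (2), I would follow the standard three-step Kurdyka--\L{}ojasiewicz argument. First, the strong convexity of $\psi$ upgrades the descent of part (1) into a genuine sufficient-decrease condition $\mathcal{L}^{k+1}\le \mathcal{L}^{k}-a\|Z^{k+1}-Z^{k}\|_{F}^{2}$. Second, I would construct an explicit element of $\partial\Phi(Z^{k+1})$ from the optimality conditions of the $W$- and $U$-subproblems; using the Lipschitz continuity of $\nabla\Phi$ and $\nabla\psi$ on the bounded set containing $\{Z^{k}\}$, its norm is bounded by $b\bigl(\|Z^{k+1}-Z^{k}\|_{F}+\|Z^{k}-Z^{k-1}\|_{F}\bigr)$, the relative-error condition in which the inertial step produces the second increment. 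Third, since $\Phi$ is built from the polynomials $\|W-UU^{T}\|_{F}^{2}$ and $\|U\|_{F}^{2}$ together with the indicator of the polyhedral set $\{\max(0,W)=M\}$, it is semi-algebraic and hence satisfies the KL property. Invoking the abstract finite-length theorem of \cite{BolteST14,WangQCH25} then gives $\sum_{k=0}^{\infty}\|Z^{k+1}-Z^{k}\|_{F}<\infty$.

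The main obstacle I anticipate is the relative-error bound in part (2): because the gradient step is taken at the inertial point $\bar U^{k}$ rather than at $U^{k}$, any subgradient of $\Phi$ at $Z^{k+1}$ carries the term $\nabla_{U}F(W^{k+1},U^{k+1})-\nabla_{U}F(W^{k+1},\bar U^{k})$ together with the Bregman mismatch $\nabla\psi(U^{k+1})-\nabla\psi(\bar U^{k})$, both of which must be re-expressed through the consecutive increments $\|Z^{k+1}-Z^{k}\|_{F}$ and $\|Z^{k}-Z^{k-1}\|_{F}$. This is precisely where the boundedness of $\{Z^{k}\}$ and the local Lipschitz hypotheses on $\nabla\Phi$ and $\nabla\psi$ enter, and cleanly handling the coupling between the $W$- and $U$-blocks is the delicate part; by contrast, verifying the KL property is routine given the semi-algebraicity of every constituent function.
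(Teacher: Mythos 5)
Your proposal is correct and follows essentially the same route as the paper, which itself gives no self-contained argument but simply verifies the $L$-smad hypothesis (Proposition \ref{psi_def}) and defers to Theorems 1 and 2 of \cite{WangQCH25}. Your additional sketch of the internal machinery (Lyapunov descent with the inertial term absorbed via \eqref{beta_condition}, the relative-error subgradient bound, and semi-algebraicity giving the KL property) is consistent with the standard argument that reference carries out.
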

\section{Numerical experiments}\label{numerical} 
In this section, we conduct experiments on both synthetic data and real data to illustrate the performance of our proposed model \eqref{NSMD-S} and algorithm (Algorithm \ref{NSMD-AAPB}) and compare it to other state-of-the-art ones. The numerical experiments are implemented in MATLAB and conducted on a computer with an Intel CORE i7-14700KF @ 3.40GHz and 64GB RAM. The implementation will be publicly accessible at \url{https://github.com/nothing2wang/NSMD-AAPB}  once this manuscript is accepted.

The algorithm terminates if one of the following three conditions is satisfied.
\begin{itemize}
\item[(1)] The maximum run time ($\max_{T}$) (s) is reached. 
\item[(2)] The maximum number of iterations ($\max_{K}$) is reached.
\item[(3)] All algorithms for ReLU-based models terminate when
\[
\text{Tol}:=\frac{\|M-\max(0,UU^{T})\|_{F}}{\|M\|_{F}}\le \varepsilon, 
\]
where $\varepsilon>0$ is a small number.
\end{itemize}

\subsection{Sythetic datasets}\label{synthetic-part}
To evaluate the effectiveness of the model \eqref{NSMD-S} and proposed algorithm (Algorithm \ref{NSMD-AAPB}) on non-negative symmetric matrix dataset with varying sparsity, we employ the following pseudo-code to generate the matrix $M$,
\begin{eqnarray}
\begin{aligned}
U &\leftarrow \text{randn}(m,\bar{r}),\\ 
\hat{M} &\leftarrow UU^{T},\\
M &\leftarrow \max(0, \hat{M} - p\hat{M}_{\max}),
\end{aligned} \label{syn_matrix_M}
\end{eqnarray}
where $\hat{M}_{\max}$ is the maximum value of $\hat{M}$, and $p\in[0,1]$ controls the sparsity of $M$.  Specifically, when $p=0$, then the sparsity is $50\%$ (the  proportion $(\%)$ of zero elements), and when $p=1$, the corresponding sparsity is $100\%$.  We know that when $p>0$, the rank of $M$  typically does not equal $\bar{r}$ and is often considerably larger, even full rank. Consequently, we usually opt for a larger value of $r$ to perform the low-rank approximation.

We use different $m$, $\bar{r}$, and $p$ to generate matrix $M$. Subsequently, we conduct low-rank approximations using different values of r and investigate the efficiency of Algorithm \ref{NSMD-AAPB} under different $\beta_{k}=\beta\frac{k-1}{k+2}$ with 
\[
\beta\in\{0,0.2,0.4,0.6,0.8,0.95, 1 \}.
\]
And we let $\max_{T}=30$, $\max_{K}=1000$, and $\varepsilon=10^{-4}$. See Table \ref{syn_table_beta} for more details.

\begin{table*}[!ht]
  \begin{center}
  \setlength{\tabcolsep}{1mm}
    \caption{The relative errors under different $m$, $\bar{r}$ and $p$ for synthetic datasets for Algorithm \ref{NSMD-AAPB} with different $\beta_{k}=\beta\frac{k-1}{k+2}$. Where the first row is the $m$ and $\bar{r}$, the second row indicates the value of $p$, and the third row indicates the value of approximation rank $r$.} 
    \label{syn_table_beta}
    \begin{tabular}{c|c c c c| cc cc} 
    \hline 
      \multirow{3}{*}{Algorithm}  & \multicolumn{4}{c|}{$m=500, \bar{r}=10$} & \multicolumn{4}{c}{$m=1000, \bar{r}=20$}  \\\cline{2-9}
      & \multicolumn{2}{c|}{$p=0$} & \multicolumn{2}{c|}{$p=0.1$} & \multicolumn{2}{c|}{$p=0.05$} &  \multicolumn{2}{c}{$p=0.08$}   \\\cline{2-9} 
      &  $10$&\multicolumn{1}{c|}{$12$}&70 & 120 & 90 & \multicolumn{1}{c|}{150}  & 150 & 200\\\hline
      $\beta=0$ & 9.8e-5 &2.4e-3 & 1.6e-1 & 1.4e-1& 1.6e-1 & 1.3e-1 &1.8e-1 & 1.8e-1   \\\hline
      $\beta=0.2$ & 9.7e-5 &1.9e-3 & 1.6e-1 & 1.4e-1& 1.6e-1 & 1.2e-1 &1.7e-1 & 1.6e-1    \\\hline
      $\beta=0.4$ & 9.9e-5 &1.5e-3 & 1.6e-1 & 1.3e-1& 1.5e-1 & 1.1e-1 &1.6e-1 & 1.4e-1    \\\hline
      $\beta=0.6$ & 9.6e-5 &1.0e-3 & \textbf{1.5e-1} & 1.3e-1& 1.5e-1 & 1.0e-1 &1.5e-1 & 1.3e-1   \\\hline
      $\beta=0.8$ &9.3e-5 &5.6e-4 & \textbf{1.5e-1} & 1.3e-1& 1.5e-1 & 9.7e-2 &1.4e-1 & 1.2e-1    \\\hline
      $\beta=0.95$ & \textbf{8.2e-5} & 1.9e-4 & \textbf{1.5e-1} & \textbf{1.2e-1}& \textbf{1.4e-1} & 9.5e-2 &1.4e-1 & \textbf{1.1e-1}   \\\hline
      $\beta=1$ & 8.6e-5 &\textbf{9.9e-5} & \textbf{1.5e-1} & \textbf{1.2e-1}& \textbf{1.4e-1} & \textbf{9.4e-2} &\textbf{1.3e-1} & \textbf{1.1e-1}   \\\hline
    \end{tabular}
  \end{center}
\end{table*}
\begin{figure}
\setlength\tabcolsep{1pt}
\centering
\begin{tabular}{c}
\includegraphics[width=0.485\textwidth]{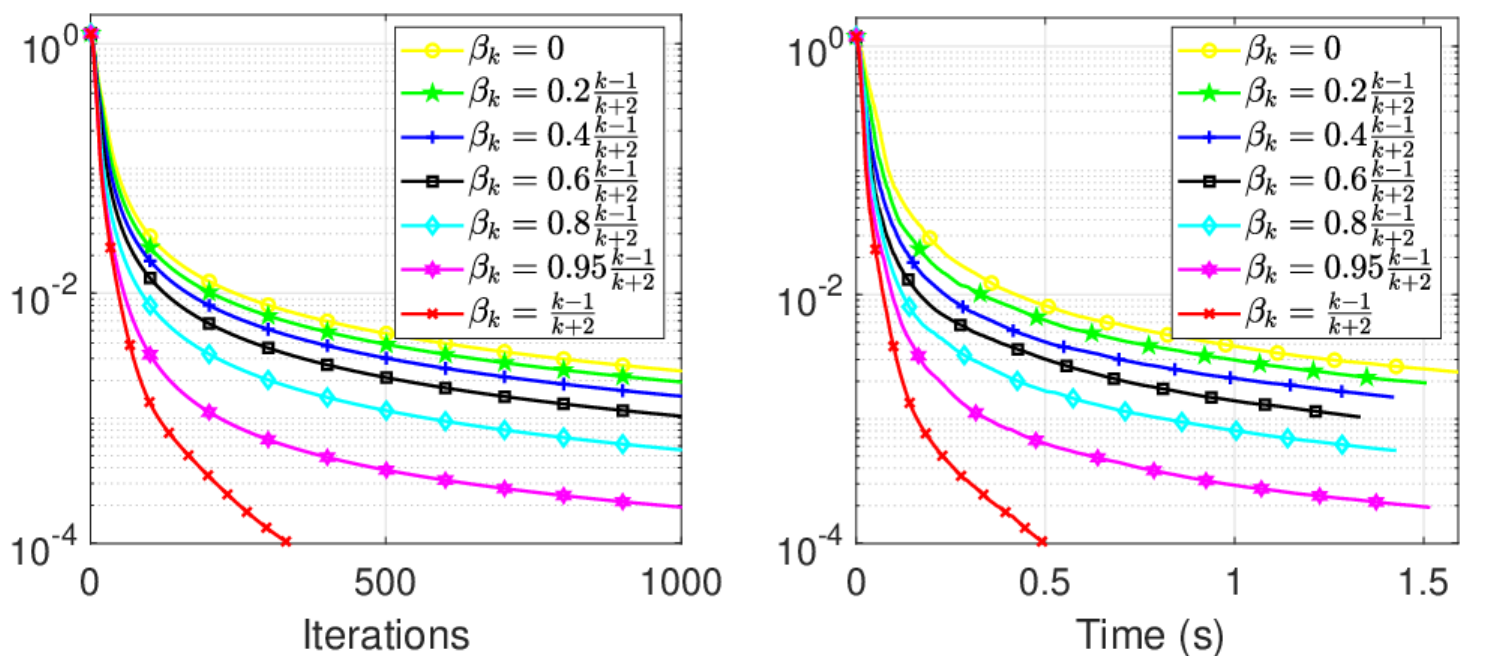}\\
(a) $m=500, \bar{r}=10, p=0, r=12$.\\
\includegraphics[width=0.485\textwidth]{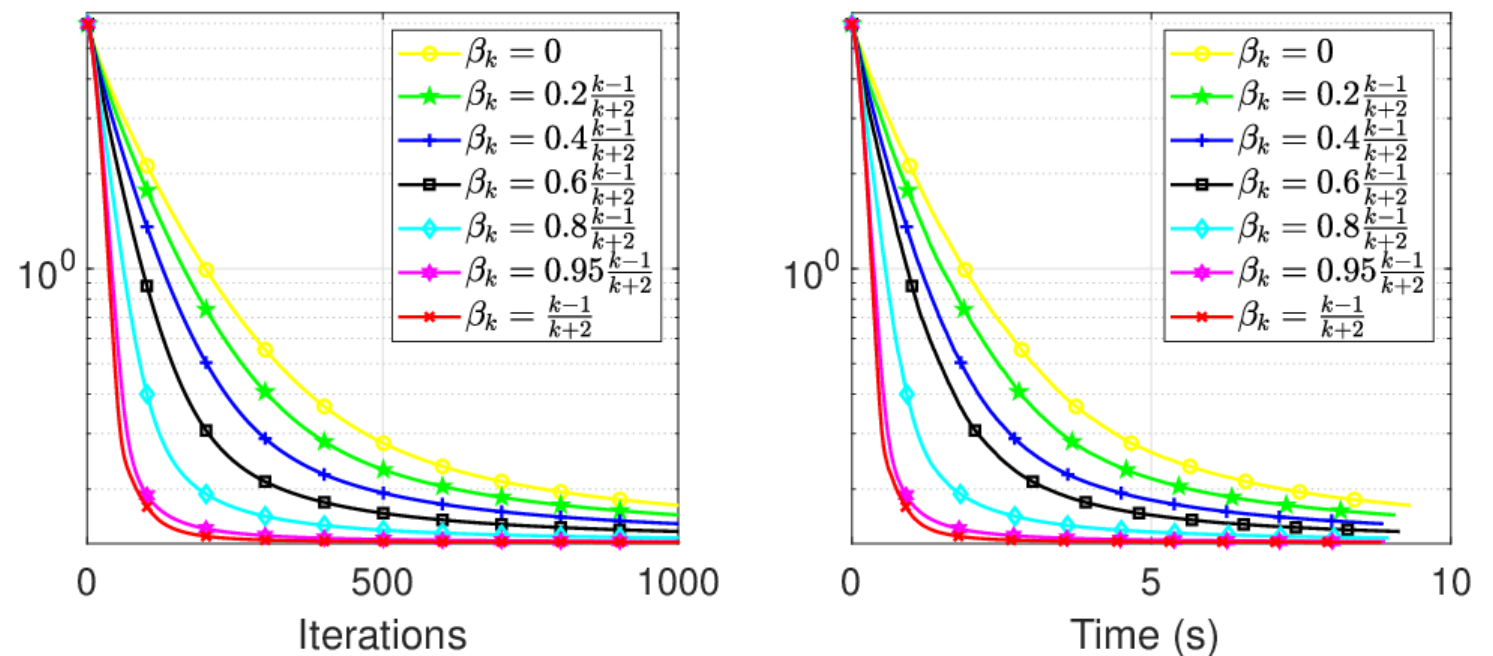}\\
(b) $m=1000, \bar{r}=20, p=0.08, r=150$.
\end{tabular}
\caption{Numeric results for the synthetic dataset with different $m$, $\bar{r}$, $p$, and $r$ under Algorithm \ref{NSMD-AAPB} for different $\beta_{k}$.}
\label{syn_figs_beta}
\end{figure}

From Table \ref{syn_table_beta}, it can be observed that, under identical stopping conditions, as $\beta_{k}$ increases, it yields superior numerical results. This finding is consistent with the numerical outcomes reported in numerous prior studies \cite{Nesterov1983, PockS16}. Furthermore, for the same set of conditions, an increase in $r$ leads to better approximation quality in the numerical results. However, it should be noted that the approximation remains suboptimal due to the high-rank nature of the generated nonnegative sparse symmetric matrices. For instance, when $m=500,\bar{r}=10, p=0.1$, the matrix $M$ generated by \eqref{syn_matrix_M} has a rank of $500$, which is a full-rank matrix.  

To further illustrate these findings, we present two cases from Table \ref{syn_table_beta} in Fig \ref{syn_figs_beta}.  The trends observed in the curves within the figure corroborate the numerical results presented in Table \ref{syn_table_beta}. For the sake of simplicity and consistency in subsequent numerical experiments, we opt to set $\beta_{k}=\frac{k-1}{k+2}$.

\subsection{Real datasets} \label{real-part}
In this subsection, we carry out numerical experiments on six real datasets\footnote{\url{http://www.cad.zju.edu.cn/home/dengcai/Data/data.html}}, with detailed information provided in Table \ref{details_data}.
\begin{table}[ht!]
\begin{center}
\caption{Characteristics of six real datasets.}   
\label{details_data}
\begin{tabular}{c|c c c } \hline
Dataset & $m$ & $n$ & $r$  \\\hline
\emph{ORL} & 400 & 4096 & 40 \\
\emph{YableB} & 2414 & 1024 & 38 \\
\emph{COIL20} & 1440 & 1024 &20\\
\emph{COIL100} & 7200 & 1024 &100\\
\emph{PIE} & 2856 & 1024 & 68 \\
\emph{TDT2} & 9394 & 36771 & 30 \\\hline
\end{tabular}
\end{center}
\end{table}
To validate the effectiveness of the proposed model in this paper, we compare it with the following symmetric non-negative matrix factorization model, 
\begin{eqnarray}
\begin{aligned}
\min_{U}\,\frac{1}{2}\|M-UU^{T}\|_{F}^{2}+\frac{\lambda}{2}\|U\|_{F}^{2},\quad \text{s.t.}\,\, U\ge0,
\end{aligned}\label{SNMF-model}
\end{eqnarray}
where $M$ is the similarity matrix between data $i$ and $j$, which constructed by the procedures in \cite[section 7.1, step 1 to step 3]{KuangYP15} and self-tuning method \cite{ZhuLLL18}.

We utilize the SymANLS and SymHALS\footnote{\url{https://github.com/xiao-li-hub/Dropping-Symmetric-for-Symmetric-NMF}} \cite{KuangYP15, ZhuLLL18} algorithms to solve this optimization problem.  We let $\max_{T}=300$, $\max_{K}=100$, and $\varepsilon=10^{-4}$ for all compared algorithms.
\begin{table}[h!]
\begin{center}
  \fontsize{10}{11}\selectfont
  \setlength{\tabcolsep}{0.6mm}
\caption{The relative errors for five datasets with compared algorithms. Where NSMD-APB is the Algorithm \ref{NSMD-AAPB} with $\beta_{k}=0$.}   \label{real_table_error}
\begin{tabular}{c|c  c | c  c } \hline
\multirow{2}*{Data} & \multicolumn{2}{c|}{model \eqref{SNMF-model}} & \multicolumn{2}{c}{model \eqref{NSMD-S}}\\
& SymANLS& SymHALS & NSMD-APB &NSMD-AAPB \\\hline
\emph{ORL} & 2.1e-1 & 2.1e-1 & 1.7e-1 & \textbf{1.6e-1}\\
\emph{YableB} & 4.1e-1 & 4.1e-1 & 3.7e-1 & \textbf{3.6e-1}\\
\emph{COIL20} & 4.3e-1 & 4.3e-1 & 3.6e-1 & \textbf{3.5e-1}\\
\emph{COIL100} & 4.2e-1 & 4.2e-1 & 3.9e-1 & \textbf{3.4e-1}\\
\emph{PIE} & 3.8e-1 & 3.9e-1 & 3.7e-1 & \textbf{3.0e-1}\\
\emph{TDT2}& 4.8e-1 & 4.8e-1 & 4.8e-1 & \textbf{4.6e-1}\\\hline
\end{tabular}
\end{center}
\end{table}

As shown in Table \ref{real_table_error}, when compared with the model in \ref{SNMF-model}, the model proposed in this paper achieves a lower relative error, thereby confirming its effectiveness. Fig \ref{real_fig_data} presents the iterative performance of the algorithm on two datasets. It can be observed that the NSMD-AAPB algorithm does not perform optimally in the initial stages; however, as the iteration progresses, it yields superior numerical results. Moreover, in comparison with the NSMD-APB algorithm, the acceleration technique indeed enhances the numerical performance. 
\begin{figure}
\setlength\tabcolsep{0.1mm}
\centering
\begin{tabular}{cc}
\includegraphics[width=0.242\textwidth]{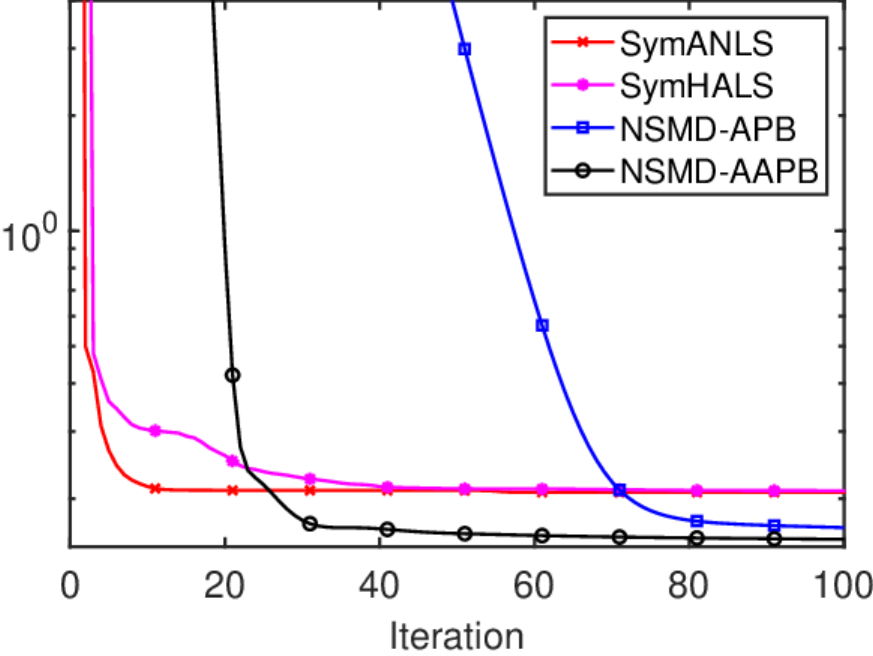} & \includegraphics[width=0.242\textwidth]{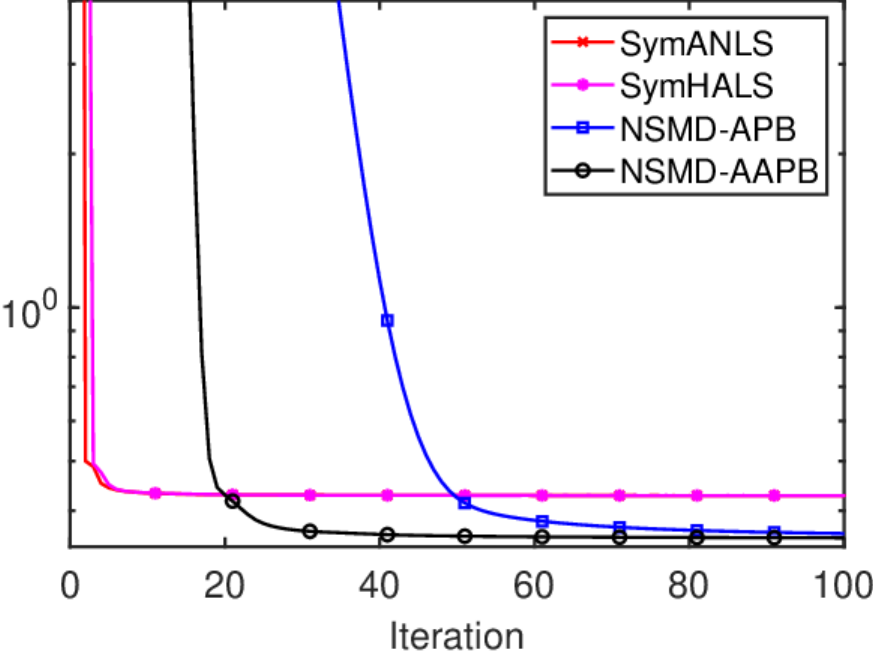}\\
(a) \emph{ORL} dataset. & (b) \emph{COIL20} dataset.\\
\includegraphics[width=0.242\textwidth]{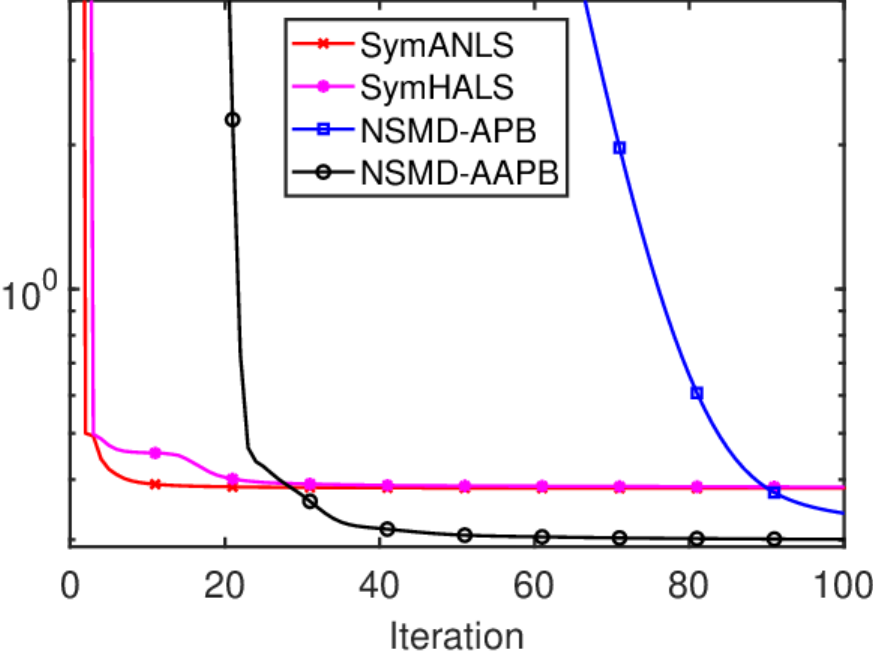} & \includegraphics[width=0.242\textwidth]{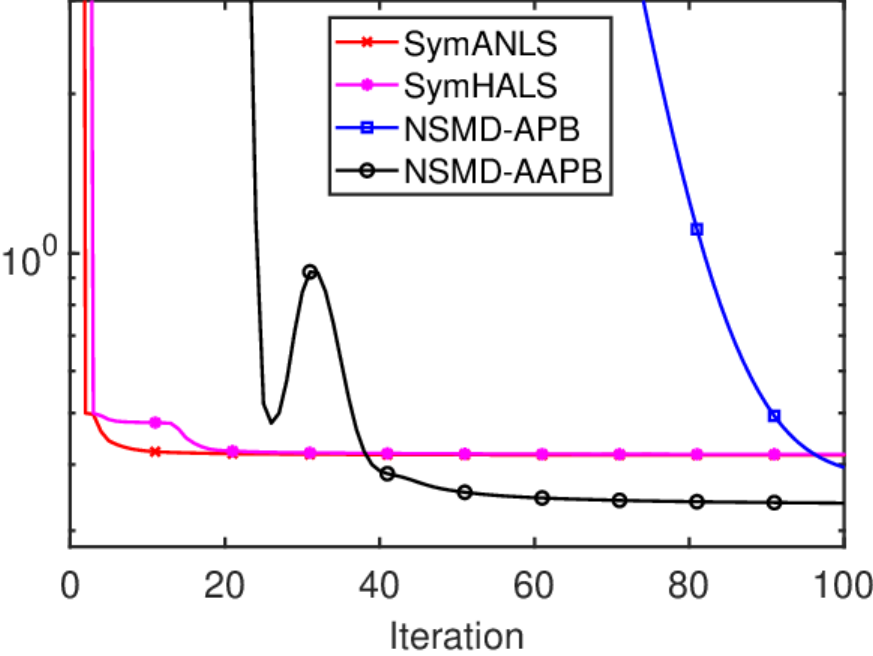}\\
(a) \emph{PIE} dataset. & (b) \emph{COIL100} dataset.
\end{tabular}
\caption{Numeric results for four real datasets in Table \ref{details_data} under four  compared algorithms. }
\label{real_fig_data}
\end{figure}

In conclusion, based on the experiments conducted in this paper, it is evident that the symmetric matrix decomposition model based on ReLU can achieve better numerical results.

\section{Conclusion}\label{conclusion}
This paper established a ReLU-Based NSMD model for non-negative sparsity symmetric matrices. Specifically, we developed an accelerated alternating partial Bregman (AAPB) algorithm tailored for solving this model and provided the convergence result of the algorithm. The closed-form solution was also derived by selecting the kernel generating distance. Numerical experiments on synthetic and real datasets demonstrated the effectiveness and efficiency of our proposed model and algorithm. Exploring more models and applications of ReLU-based symmetric matrix factorization remains for future research work.

\section*{Declarations}
{\bf Funding:} {This research is supported by the National Natural Science Foundation of China (No.12401415), the 111 Project (No. D23017), the Natural Science Foundation of Hunan Province (No. 2025JJ60009)}.

\noindent{\bf Data Availability:} Enquiries about data availability should be directed to the authors.

\noindent{\bf Competing interests:} The authors have no competing interests to declare that are relevant to the content of this article.



\bibliographystyle{abbrv}
\bibliography{Ref_NSMD}

\end{document}